\documentclass{article}
\usepackage[preprint]{neurips_2026}
\usepackage[utf8]{inputenc}
\usepackage[T1]{fontenc}
\usepackage[colorlinks=true,allcolors=blue]{hyperref}
\usepackage{url}
\usepackage{booktabs}
\usepackage{amsfonts}
\usepackage{amsthm}
\usepackage{amsmath}
\usepackage{amssymb}
\usepackage{nicefrac}
\usepackage{microtype}
\usepackage{xcolor}
\usepackage{graphicx}
\usepackage{subcaption}
\usepackage{multirow}
\usepackage{enumitem}
\usepackage{float}
\usepackage{wrapfig}
\setcitestyle{square,comma}

\newtheorem{definition}{Definition}
\newtheorem{theorem}{Theorem}
\newcommand{\norm}[1]{\left\|#1\right\|}

\title{Do Flatter Minima Drive Better Generalization? An Algorithmic Separation in Grokking}
\author{%
  Mohnish Harwani \\
  Purdue University \\
  \texttt{mharwan@purdue.edu}
}

\begin{document}
\maketitle

\begin{abstract}
Flat loss landscapes have long been linked to better generalization in neural networks. However, its role as a causal mechanism for generalization is less established. Grokking provides an unique testbed to understand this distinction: models are prone to fit observed data using non-generalizing structure and remain in that regime for prolonged periods, transitioning to generalization only under particular training conditions. In this work, we study whether flat loss landscapes can act as a driving mechanism in this transition. While recent work has argued for flatness as a necessary geometric condition for this transition, we find that biasing training toward flatter solutions using sharpness-aware minimization (SAM) is insufficient to reliably induce this transition, despite producing flatter solutions. However, when SAM is paired with mechanisms that drive generalization such as weight decay, an interesting property emerges: SAM can accelerate the transition to generalizing solutions by up to 4x at the epoch-level. We theoretically untangle this relationship between SAM and weight decay using a minimal interpolating two-layer ReLU model with both memorizing and generalizing solutions. We show that even in this simple setup, flatness alone cannot distinguish a memorizing solution from a generalizing one, while weight decay favors generalizing solutions. However, under a local stability analysis, there exists a window where a memorizing interpolant is locally stable under gradient descent but unstable under SAM in the low-norm regime, which can explain SAM's ability to accelerate this transition. Overall, our results provide a more interpretable account of the role of flatness in driving generalization, especially in settings where models are vulnerable to minimizing loss through learning non-generalizing structure.
\end{abstract}

\section{Introduction}
\label{sec:intro}
Distinguishing properties that correlate with generalization from those that causally drive it is especially important when learning from noisy, real-world data. In these settings, models can exploit sample-specific features and noise to achieve high performance on their training data, while failing to generalize on unseen data. An attractive mechanism for inducing generalization is biasing training towards flatter loss landscapes which have long been associated with generalization in artificial neural networks. Models that converge to flatter minima often exhibit better performance on unseen data \citep{hochreiter1997flat,keskar2017largebatch}, motivating optimization methods such as sharpness-aware minimization (SAM) \citep{foret2021sam} that explicitly bias training toward locally flat solutions at an additional computational cost \citep{andriushchenko2022towards,compagnoni2023sde,wen2023sharpness}. 

To understand whether flatness drives generalization in settings where memorizing solutions are particularly attractive, we study grokking, a unique phenomenon where models first reach near-zero training loss while remaining near chance on held-out data, and only after prolonged optimization abruptly transition to high test accuracy \citep{power2021grokking}. Unlike conventional training, where memorization and generalization often develop simultaneously, grokking exposes a temporally separated transition from a memorizing solution to a generalizing one. This makes it possible to algorithmically separate and track differing characteristics between memorizing and generalizing interpolants. Prior work has uncovered two leading mechanisms that can induce this transition from memorization to generalization. Mechanistic studies of modular arithmetic have established a decrease in parameter norm as central in inducing this transition from memorization to generalization, arguing that generalizing circuits become favorable once they are cheaper than memorizing alternatives \citep{varma2023explaining}. More recently, flatness has been proposed as a necessary geometric condition to induce this transition, \citep{han2025flatness} which stands as an alternative explanation. 

Our observations reconcile these two threads of prior research on mechanisms that induce generalization in grokking. We use SAM to directly test whether biasing optimization toward flatter solutions is sufficient to trigger the memorization-to-generalization transition. Across several architectures on modular arithmetic tasks, we find that it is not: SAM alone consistently produces substantially flatter solutions while models remain memorizing. These experiments provide direct counterexamples to the hypothesis that flatness alone selects generalizing solutions, challenging existing work. \citep{han2025flatness} However, when pairing SAM with weight decay, we observe that the system reaches generalizing regimes substantially earlier at an epoch level: up to 4x faster than weight decay alone. This acceleration increases with the strength of SAM's flatness regularization, until SAM can no longer meaningfully enforce flatness due to inaccuracy of the first-order Taylor estimate that it relies upon. Figure~\ref{fig:frontpage} summarizes this contrast. We explain these phenomena through an algorithmic separation: while SAM fails to distinguish memorizing and generalizing interpolants (whereas parameter norm does), SAM changes the local stability of memorizing solutions once weight decay has moved the system into the appropriate low-norm regime.

\begin{wrapfigure}{r}{0.65\textwidth}
\vspace{-10pt}
\centering
\includegraphics[width=\linewidth]{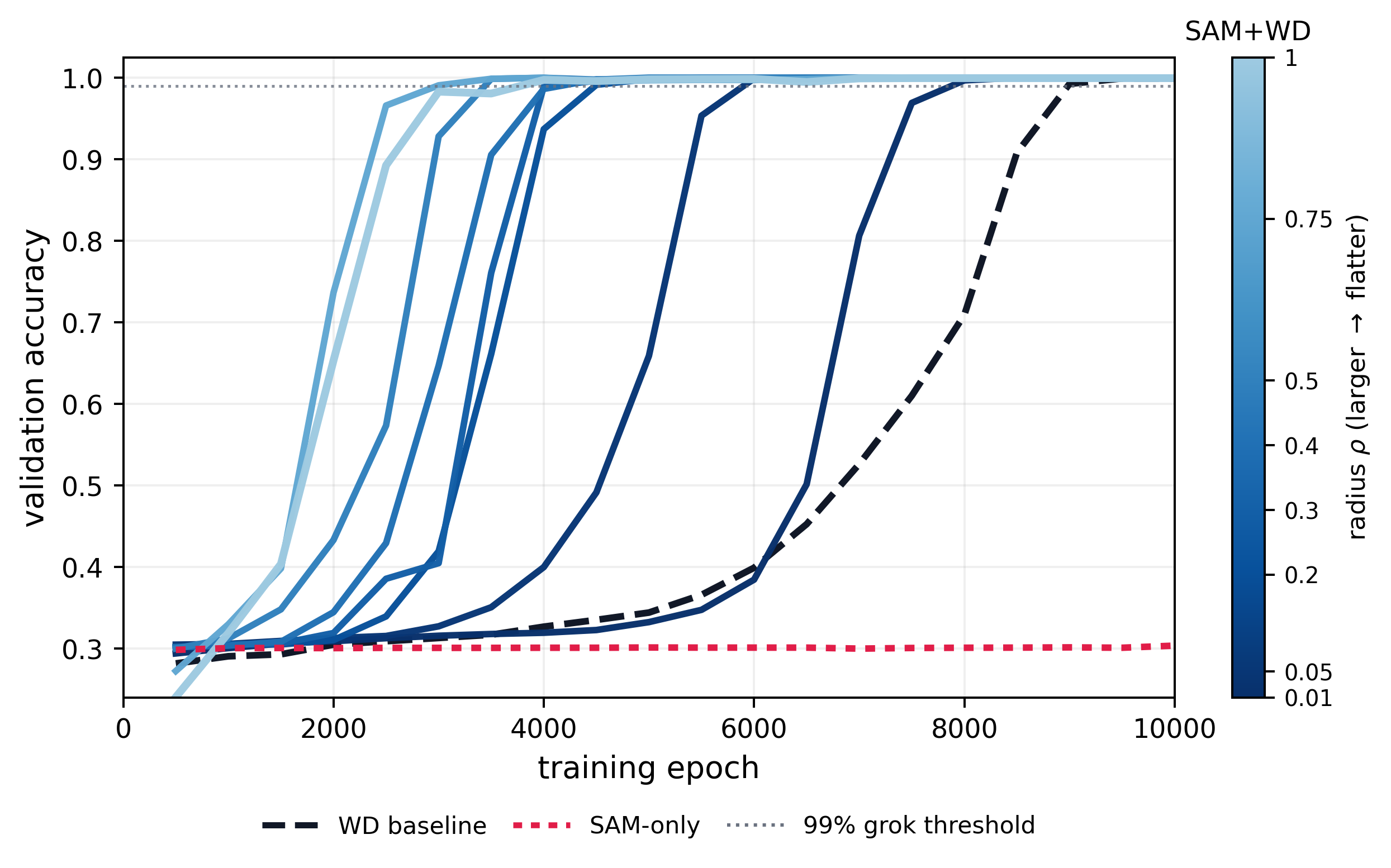}
\vspace{-10pt}
\caption{Flatness regularization accelerates generalization in grokking where grokking already occurs. It does not, however, trigger generalization alone. Lighter colors correspond to stronger regularization. Stronger regularization is linked with faster generalization. }
\vspace{-10pt}
\label{fig:frontpage}
\end{wrapfigure}

To develop a theoretical understanding of this algorithmic separation, we construct a minimal two-layer interpolating network that isolates the sufficient structural conditions under which this separation emerges. We explicitly construct two classes of local optima—\emph{memorizing} and \emph{generalizing} interpolants—which, by design, have identical trace flatness; thus flatness does not distinguish between them. At the same scale, however, generalizing interpolants use shared features rather than sample-specific neurons, and (for sufficiently large training set size) the norm of the generalizing solutions is strictly smaller than that of the memorizing solutions. Thus, incorporating weight-decay into optimization can trigger grokking as it leads to eventual decrease in the norm of the parameters. We then analyze local stability under gradient descent (GD) and SAM. Under the quadratic SAM approximation, SAM changes the `effective' local curvature, producing a regime in which a memorizing interpolant is stable under GD but unstable under SAM. This shows that SAM is not the selector of generalization: selection is driven by norm and weight decay \citep{loshchilov2019decoupled,nacson2024wd}. SAM instead conditionally accelerates escape from memorizing solutions faster than GD once the trajectory has entered low-norm regimes where memorizing solutions become fragile.

We make three key contributions in this work. First, we provide empirical counterexamples to flatness-as-selector explanations of grokking on interpretable, modular arithmetic tasks: SAM-only produces very flat models that remain memorizing, while WD and SAM+WD grok. Second, we prove an algorithmic separation between flatness, norm, and local stability in an interpolating ReLU model. 
Third, we extend our findings beyond modular arithmetic to show strikingly similar acceleration-flatness relationships in image classification and natural language processing tasks across several architectures, demonstrating the broader applicability of our work beyond interpretable settings. Together, these results highlight the interplay between loss landscape geometry (via flatness), model capacity (via norm), and generalization.

\section{Related Work}
\label{sec:related}

\noindent\textbf{Grokking and its mechanisms.}
\citet{power2021grokking} discovered grokking on modular arithmetic and noted its dependence on weight decay.
\citet{nanda2023progress} showed grokking corresponds to Fourier circuit formation and introduced progress measures including weight norm.
\citet{varma2023explaining} formalized this via a circuit efficiency framework: grokking occurs when $\norm{\mathbf{w}}$ falls below a threshold where generalizing circuits become cheaper than memorizing circuits.
\citet{liu2023omnigrok} identified a ``Goldilocks zone'' for weight norm in grokking.
More recently, \citet{zhang2025glass} studied grokking under WanD and reported generalizing arithmetic solutions at substantially higher parameter norms than the AdamW threshold we identify here, suggesting that the circuit-efficiency crossover norm can be optimizer-dependent.
We therefore do not claim a universal optimizer-independent norm law; our causal claim is scoped to grokking under the standard full-batch AdamW/SAM family.

\noindent\textbf{SAM and the flatness hypothesis.}
The hypothesis that flat minima generalize better has a long history~\citep{hochreiter1997flat, keskar2017largebatch}.
One popular optimization algorithm in this line of research is Sharpness-Aware Minimization (SAM), which minimizes maximum loss in the vicinity of the current minima \citep{foret2021sam}.  \citet{dinh2017sharp} showed that sharpness measures can be manipulated by reparametrization, and \citet{kaur2023maximum} demonstrated that $\lambda_{\max}$ can decrease under SAM without generalization gains.
Theoretical analyses have revealed that SAM's implicit bias differs from direct curvature minimization: \citet{andriushchenko2022towards} showed SAM biases toward $\ell_1$-norm minimization in diagonal linear networks, while \citet{compagnoni2023sde} showed via SDE analysis that SAM implicitly regularizes the gradient norm $\norm{\nabla\mathcal{L}}$ rather than the Hessian.
\citet{wen2023sharpness} argued that sharpness alone is neither necessary nor sufficient for generalization, which is consistent with the flat-but-memorizing counterexample we isolate here in grokking.

\noindent\textbf{Flatness in grokking.}
\citet{han2025flatness} observes that regularizing for \textit{sharper solutions} delays generalization, resulting in grokking-like phenomena. They conclude that flatness must, in turn, be a necessary prerequisite for generalization within grokking. We recreate their work on language and vision tasks, as well as more canonical grokking setups such as modular arithmetic, finding that 1) flatness can occur without generalization in grokking and 2) sharp solutions often generalize. Specifically, our work challenges the  necessity claim: while relative flatness can correlate with the onset of generalization, our results show that this correlation does not extend to flatness being a causal selector of this transition. Additionally, we refine their work to demonstrate specific ways flatness can be used to actually accelerate generalization (as opposed to using sharpness to delay it) which can explain the benefit of methods like SAM in practice. We distinguish a relationship between the extent of flatness and the speed of generalization in grokking. Lastly, we discuss how their proposed metric of relative flatness is not invariant under changes in confidence or scaling, which can potentially be misleading, motivating our work further. See Section~\ref{sec:han_bad}.

\noindent\textbf{Grokking beyond algorithmic data.} Rather than attributing delayed generalization solely to the discrete structure of algorithmic tasks, \citet{liu2023omnigrok} argue that grokking can arise from a geometric mismatch between the training and test loss landscapes. Grokking occurs when optimization first reaches a high-norm memorizing solution, after which regularization slowly moves the model toward the lower-norm region where test performance improves. By increasing initialization scale and controlling regularization, this work induces this transition on image, language, and molecular prediction tasks. We recreate their setup, training MLPs on MNIST and LSTMs on IMDb with larger initialization scales and weight decay, reinforcing the existence of a relationship between the extent of flatness and the speed of grokking in more practical training regimes, further demonstrating the broader applicability of our results.

\section{Preliminaries}
To enforce flatter loss landscapes and test our claim, we use SAM, as introduced by \citet{foret2021sam}. SAM relies on a first-order Taylor estimate of the Hessian to estimate the degree of flatness in the loss landscape, with a higher value of $\rho$ indicating an estimated flatter region, until $\rho$ becomes too large and the flatness can no longer be enforced due to the first-order Taylor estimate degrading in accuracy.

Specifically, consider a model $f$ : $X \rightarrow Y$ parameterized by a weight vector $w$ and a per-sample loss function $l$: $W \times X \times Y \rightarrow R_+$. Define training data S = \{($x_1$, $y_1$),..., ($x_n$, $y_n$)\} sampled i.i.d. from a data distribution D. The traditional training loss is defined as $L_S(w) = \sum_{i=1}^{n} l(y_i, f(x_i, w))/n$. SAM searches for a flatter minimum by combining the traditional loss with maximum loss in the vicinity defined by perturbation radius $\rho$. Formally, it is defined as follows: 

\begin{align*}
L_{SAM}(w) 
& = \min_w \max_{\|\epsilon(w)\|_2 \leq \rho}L_S(w+\epsilon(w)),\\
\epsilon^*(w) & \approx \rho \cdot sign \left( \nabla_w L_{S}(w) \right) \cdot \dfrac{\lvert \nabla_w L_{S}(w)\rvert}{\left\| \nabla_w L_{S}(w) \right\|_2}
\end{align*}

\section{Theoretical Motivation}
\label{sec:theory}

In this section, we construct and analyze a 2-layer ReLU network to motivate why flatness can accelerate but not induce generalization. We find that, in this minimal example, flatness by itself may not be able to act as a selector between memorizing and generalizing branches because these branches can have identical trace flatness while generalizing branches have strictly smaller parameter norms. We then analyze local stability under gradient descent (GD) and SAM, showing how in these solutions a memorizing interpolant can be unstable under SAM but not GD, indicating a potential avenue through which SAM can conditionally accelerate escape from memorizing solutions faster and thus accelerate the transition to generalizing solutions.

\subsection{Setup: Interpolating ReLU construction}
\label{subsec:relu_construction}

Let $
x\in\{-1,+1\}^d,
y(x)=x_1x_2,
$ and 
$S=\{(x_i,y_i)\}_{i=1}^n$ 
be a training set with distinct inputs and labels $y_i=y(x_i)$. We consider a two-layer ReLU network
$
f_\theta(x)
=
\sum_{j=1}^m a_j\sigma(w_j^\top x+b_j),
\text{ with }
\sigma(t)=\max\{t,0\}.
$
We use the squared training loss
$
\mathcal{L}_S(\theta)
=
\frac{1}{2n}\sum_{i=1}^n
(f_\theta(x_i)-y_i)^2.
$
At an interpolating differentiable solution, the Hessian is:
$
H(\theta)
=
\nabla^2\mathcal{L}_S(\theta)
=
\frac{1}{n}\sum_{i=1}^n g_i g_i^\top$, with $
g_i:=\nabla_\theta f_\theta(x_i)$.

\noindent\textbf{Memorizing interpolant.}
For each training sample $x_i$, define one neuron
$
w_i=r x_i,
b_i=-r(d-1),
a_i=\frac{y_i}{r}.
$
This neuron activates only on the full data point $x_i$, thus memorizing it. Thus, the resulting network interpolates the training set but outputs zero on hypercube points not in the training set.

\noindent\textbf{\((C,r)\)-generalizing interpolant.}
Fix $C\ge 2$. For each pattern
$
z=(z_1,\dots,z_C)\in\{-1,+1\}^C,
$
define one neuron
$
w_z=r\sum_{k=1}^C z_k e_k,
b_z=-r(C-1),
a_z=\frac{z_1z_2}{r}.
$
For an input $x$, exactly one neuron matching the pattern activates
$
z=(x_1,\dots,x_C).
$
The output is then
$
f_\theta(x)=x_1x_2.
$
Thus, multiple inputs can activate the same neuron, in contrast with the memorizing interpolant where each neuron is activated by exactly a single data point. Thus the \((C,r)\)-interpolant generalizes because it has \emph{learnt} a superset of the pattern necessary for generalization. Smaller $C$ corresponds to sparser learning. Further define $q_d(r):=r^2+\frac{d+1}{r^2}$. As we shall see in the sequel, $q_d$ measures the flatness of the interpolant, entirely dependent on $r$ and independent of $C$. Thus $(C,r)$ interpolants identify a set of solutions with varying complexity ($C$) and flatness (dependent on $r$). 

\subsection{Algorithmic separation of solutions}
\label{subsec:selection_stability}
We analyze selection among interpolating solutions through \emph{local stability} of these optima~\citep{chang2025unified,dexter2024precise}.
Since both memorizing and generalizing solutions can achieve zero training loss, the tool of local stability tells us which of them are stable under the optimizer dynamics. In other words, given certain local properties of the optimum, will an optimization algorithm (e.g. gradient descent or SAM) settle at this optimum (i.e., be stable) or escape from it (unstable).
We use flatness and norm to compare relative stability properties of SAM and GD on memorizing vs generalizing minima.

\begin{definition}[Linear stability under GD and SAM~\citep{chang2025unified}]
\label{def:linear_stability}
Let $\theta^\star$ be an interpolating solution with Hessian $H\succeq 0$. Under local quadratic approximation, GD with stepsize $\eta$ has linearized update
$
J_{\rm GD}=I-\eta H.
$
Under the quadratic approximation for SAM used by \citet{chang2025unified}, the effective curvature is
$
H_{\rm SAM}=H(I+\gamma H),
\gamma:=\frac{\rho}{\alpha}>0,
$
and the linearized update is
$
J_{\rm SAM}=I-\eta H(I+\gamma H).
$
\end{definition}

Under the localized stability analysis, we say $\theta^\star$ is linearly stable under a method if the corresponding update matrix has spectral radius strictly less than one, which ensures that $\theta^\star$ lies in an attractor basin from which the optimizer will not escape (is stable within). We now present our result.

\begin{theorem}[Algorithmic separation]
\label{thm:main_separation}
Consider the memorizing interpolant $\theta_{\rm mem}(r)$ and the $(C,r)$-generalizing interpolant $\theta_{C,r}$ defined in Section~\ref{subsec:relu_construction}. Then the following hold.

\begin{enumerate}
\item \textbf{Flatness does not separate.} Both interpolants have the same trace flatness:
    \[
    \operatorname{Tr}H(\theta_{\rm mem}(r))
    =
    \operatorname{Tr}H(\theta_{C,r})
    =
    q_d(r).
    \]
\item \textbf{Norm separates.}
For any fixed scale $r>0$ and sufficiently large $n>  \frac{2^C( r^2 (C+(C-1)^2)+r^{-2})}{ r^2(d+(d-1)^2)+r^{-2}},$ 
\[
\|\theta_{C,r}\|_2^2 < \|\theta_{\rm mem}(r)\|_2^2.
\]
Thus, among interpolating solutions at the same scale, the generalizing interpolant has strictly smaller parameter norm.
\item   \textbf{SAM accelerates within the low-norm regime.} There exist thresholds $r_{\rm low }>0, r_{\rm high}>0$ such that for any
$
r \in (r_{\rm low}, r_{\rm high})
$
the memorizing interpolant $\theta_{\rm mem}(r)$ is locally stable under GD but unstable under SAM.
\end{enumerate}
\end{theorem}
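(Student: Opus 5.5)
The plan is to exploit the disjoint-support structure of the interpolants. Both interpolants are differentiable at the training points, and at each training input exactly one neuron is active, so every per-sample gradient $g_i$ is supported on the parameters of a single neuron. This gives the trace, the norm and the full spectrum of $H$ in closed form.

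\textbf{Step 1 (trace).} At an interpolating differentiable point, $\operatorname{Tr}H=\frac1n\sum_i\|g_i\|^2$.
\begin{itemize}
\item \emph{Memorizing interpolant.} At $x_i$ neuron $i$ has pre-activation $r\,x_i^\top x_i-r(d-1)=r$. Any other neuron $j$ has $x_j^\top x_i\le d-2$ because the inputs are distinct, so its pre-activation is at most $-r<0$. Hence all other neurons are strictly inactive, and no kink is touched.
\item \emph{Per-sample gradient.} The active neuron contributes $\partial_{a_i}f=\sigma(r)=r$, $\partial_{w_i}f=a_ix_i=(y_i/r)x_i$ and $\partial_{b_i}f=a_i=y_i/r$. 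Therefore $\|g_i\|^2=r^2+d/r^2+1/r^2=q_d(r)$.
\item \emph{Generalizing interpolant.} The matching neuron $z=(x_1,\dots,x_C)$ has pre-activation $rC-r(C-1)=r$, and every non-matching neuron has pre-activation at most $-r$. The gradient computation is identical; the key point is that $\partial_{w_z}f=a_zx$ is a full $d$-dimensional vector even though $w_z$ has only $C$ nonzero coordinates. So again $\|g_i\|^2=q_d(r)$.
\end{itemize}
Averaging over $i$ gives $\operatorname{Tr}H=q_d(r)$ in both cases, which is item 1.

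\textbf{Step 2 (norm).} This is direct bookkeeping.
\begin{itemize}
\item Each memorizing neuron has squared norm $r^2d+r^2(d-1)^2+r^{-2}$, so $\|\theta_{\rm mem}(r)\|_2^2=n\bigl(r^2(d+(d-1)^2)+r^{-2}\bigr)$.
\item Each of the $2^C$ generalizing neurons has squared norm $r^2C+r^2(C-1)^2+r^{-2}$, so $\|\theta_{C,r}\|_2^2=2^C\bigl(r^2(C+(C-1)^2)+r^{-2}\bigr)$.
\end{itemize}
Comparing the two expressions gives exactly the stated threshold on $n$, which is item 2.

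\textbf{Step 3 (stability window).}
\begin{itemize}
\item \emph{Spectrum.} For $\theta_{\rm mem}(r)$ the vectors $g_i$ have pairwise disjoint supports, so they are orthogonal. Hence $H=\frac1n\sum_ig_ig_i^\top$ has the eigenvalue $\lambda(r):=q_d(r)/n$ with multiplicity $n$ (eigenvectors $g_i$), and $0$ elsewhere.
\item \emph{Treatment of the kernel.} The zero eigenvalues give multiplier exactly $1$ for both methods. These directions are tangent to the manifold of interpolants (for instance the rescalings $a_j\mapsto a_j/c$, $(w_j,b_j)\mapsto c(w_j,b_j)$ within a neuron). I would therefore interpret ``spectral radius $<1$'' on the range of $H$, the normal space. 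This is the point where the statement needs care, and I would state that convention explicitly.
\item \emph{Conditions.} On the range of $H$ the multipliers are $1-\eta\lambda$ for GD and $1-\eta\lambda(1+\gamma\lambda)$ for SAM. Since $\lambda>0$, stability reduces to $\eta\lambda<2$ for GD and $\eta\lambda(1+\gamma\lambda)<2$ for SAM.
\item \emph{The window.} GD is stable and SAM unstable exactly when $\lambda\in(\lambda_*,2/\eta)$. Here $\lambda_*=\frac{-1+\sqrt{1+8\gamma/\eta}}{2\gamma}$ is the positive root of $\eta\gamma\lambda^2+\eta\lambda-2=0$, and $\lambda_*<2/\eta$ because $\gamma>0$.
\end{itemize}

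\textbf{Step 4 (translating to scale $r$).} It remains to pull the window in $\lambda$ back to a window in $r$.
\begin{itemize}
\item The function $q_d(r)=r^2+(d+1)/r^2$ is continuous and strictly decreasing on $(0,(d+1)^{1/4}]$, from $+\infty$ down to $2\sqrt{d+1}$.
\item Assume the mild condition $2\sqrt{d+1}<n\lambda_*$, equivalently a stepsize and SAM radius with $\eta\lambda_*<2$ at $\lambda_*\approx2\sqrt{d+1}/n$. This is the main obstacle: without a condition of this kind the window can be empty. If it fails, I would instead use the increasing branch $r>(d+1)^{1/4}$, or absorb the requirement into the choice of $\eta$.
\item Under that condition the preimage $\{r\le(d+1)^{1/4}:\ n\lambda_*<q_d(r)<2n/\eta\}$ is a nonempty open interval $(r_{\rm low},r_{\rm high})$. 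On it $\theta_{\rm mem}(r)$ is GD-stable and SAM-unstable, which gives item 3.
\item This branch is also where the $r^{-2}$ terms of the norm dominate. I would add a remark tying it to the ``low-norm regime'' wording, noting that the thresholds depend on $\eta,\gamma,n,d$.
\end{itemize}
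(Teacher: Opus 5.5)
Your proposal is correct and follows the paper's proof essentially step for step: one active neuron per training input gives $\|g_i\|_2^2=q_d(r)$, the norm comparison is direct bookkeeping, orthogonal per-sample gradients give the eigenvalue $q_d(r)/n$, and the window is $\ell_{\rm SAM}<q_d(r)/n<2/\eta$; you are also more careful than the paper in restricting the spectral-radius condition to the range of $H$ and in flagging that a nonemptiness hypothesis is needed, which the paper only asserts ``whenever'' $q_d(r)/n\in(\ell_{\rm SAM},\ell_{\rm GD})$ has solutions. One small fix: the sharp condition is $\eta\sqrt{d+1}<n$ (i.e.\ $2/\eta>\min_r q_d(r)/n=2\sqrt{d+1}/n$), your $2\sqrt{d+1}<n\lambda_*$ is merely sufficient, the gloss ``$\eta\lambda_*<2$'' is vacuous since $\lambda_*<2/\eta$ always holds, and the increasing-branch fallback is beside the point because $q_d$ takes the same values there---if your condition fails but $\eta\sqrt{d+1}<n$ holds, the window simply contains a neighborhood of $r=(d+1)^{1/4}$.
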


\noindent\textbf{Discussion.}
Theorem~\ref{thm:main_separation} separates three roles that are often conflated in explanations of grokking. First, flatness does not distinguish between memorizing and generalizing solutions: both interpolants have identical trace flatness, so flatness alone cannot act as a selector. Second, norm provides a clear separation: at the same scale, the generalizing interpolant uses shared features and therefore achieves strictly smaller parameter norm than the memorizing interpolant. This identifies weight decay, rather than flatness, as the mechanism that selects the generalizing branch.

Finally, the theorem clarifies the role of SAM. In high-norm regimes, memorizing solutions remain \emph{stable} under both GD and SAM i.e., GD and SAM can both get ``stuck" in these minima.  However, once weight decay reduces the norm into a low-norm regime, SAM destabilizes memorization earlier than GD, leading to faster escape from such minima. Thus SAM accelerates the transition to generalization, but only after the system has entered the regime selected by weight decay. While our theoretical result and empirical evaluation are in the context of norm and weight decay, we believe the norm here is just a proxy for model capacity in general. We leave these investigations as future work, and instead utilize this framework to motivate why this phenomenon occurs in grokking.

\section{Experiments}
\label{sec:experiments}

Motivated by our theoretical analysis, we examine the effect of SAM and weight decay on several datasets and architectures. We design two sets of experiments. First, we attempt to isolate causal mechanisms in generalization through modular arithmetic, a series of canonical grokking tasks where the epoch-level transition between memorizing and generalizing interpolants is large, and is well studied and more interpretable \citep{nanda2023progress}. In this setting, we demonstrate how flatness is insufficient to trigger generalization, while weight decay is. We also show how SAM paired with weight decay accelerates the grokking transition as $\rho$ increases, until the benefits begin to taper off as SAM is unable to enforce flatness at high values of $\rho$. 

Second, we show on vision and language tasks that demonstrate grokking-like behavior \citep{han2025flatness, liu2023omnigrok} and generalize consistently through standard training conditions, SAM maintains this trend of accelerating how fast generalizing solutions are reached at the epoch-level. While it is challenging to isolate causal mechanisms of generalization on these tasks, they support our claim that SAM accelerates the transition to a generalizing regime once sufficient conditions to trigger generalization are already present.

\subsection{Grokking on Modular Arithmetic}

Modular arithmetic tasks are particularly attractive for testing mechanisms that induce generalization, because the onset of generalization is well studied and linked to mechanisms such as the development of Fourier-like circuits \citep{nanda2023progress}. This allows us to design experiments that isolate what drives the transition to generalizing solutions. 

\begin{table}[H]
\centering
\caption{Experimental conditions. WD = Weight Decay}
\label{tab:conditions}
\small
\begin{tabular}{@{}llccl@{}}
\toprule
Condition & Description \\
\midrule
A & WD grokking baseline \\
B & SAM \\
C & SAM + WD\\
D & No regularization (memorizing baseline) \\
\bottomrule
\end{tabular}
\end{table}

For these tasks, we consider $(a + b) \bmod p$ for $p \in \{97, 113\}$. We train 1 and 2-layer transformers, in 4 conditions, summarized in Table~\ref{tab:conditions}. We train each condition on each model size, with 5 seeds (treating each $\rho$ as an independent condition for SAM-based methods). Each $\rho$ signifies a different level of flatness enforced by SAM. Detailed descriptions of the experiments are located in Appendix \ref{sec:exp_details}.

\noindent\textbf{Measuring Acceleration in Generalization} Acceleration is calculated as how much faster methods reach the required accuracy threshold (see Appendix \ref{sec:exp_details}).
Precisely, we define $\mathrm{Accel}_{\rho}$ to be the acceleration of how much faster SAM with weight decay (WD) with $\rho$ generalizes over just WD:

\[\mathrm{Accel}_{\rho} = \operatorname{median}_{s \in \{0,1,2,3,4\}} \mathrm{Accel}_{s,\rho}, \quad \mathrm{Accel}_{s,\rho} = \frac{E^{\mathrm{WD}}_{s}}{E^{\mathrm{SAM+WD}}_{s,\rho}}\]

Here, \(s\) indexes the random seed, \(\rho\) is the SAM radius,
\(E^{\mathrm{WD}}_{s}\) and \(E^{\mathrm{SAM+WD}}_{s,\rho}\) are defined as the first epoch at which the WD or SAM+WD run with seed \(s\)
completes its transition, respectively. The plotted value is the median same-seed acceleration across seeds.

\noindent\textbf{Measuring Curvature}
For modular arithmetic tasks, we verify that SAM approaches flatter solutions through 3 curvature metrics. First, we estimate the largest training-loss Hessian eigenvalue using Hessian-vector products computed by automatic differentiation, avoiding explicit Hessian materialization, and apply 50 iterations of the Lanczos method to the empirical training-loss Hessian with respect to all trainable parameters~\citep{pearlmutter1994fast,lanczos1950iteration,yao2020pyhessian}. We additionally report the scale-normalized variant \(\lambda_{\max}/\|\theta\|_2^2\) of this. Lastly, we also report the relative flatness metric as implemented by \citet{han2025flatness} who studied flatness in grokking, although we believe the Hessian eigenvalue metrics provide a more accurate estimation of flatness (See Appendix \ref{sec:han_bad}). The full tables containing all results are located in Appendix \ref{tab:nanda_full_1}, \ref{tab:nanda_full_2}. 

\paragraph{Flatness does not Drive Generalization}

\begin{figure}[t]
\centering
\includegraphics[width=0.7\textwidth]{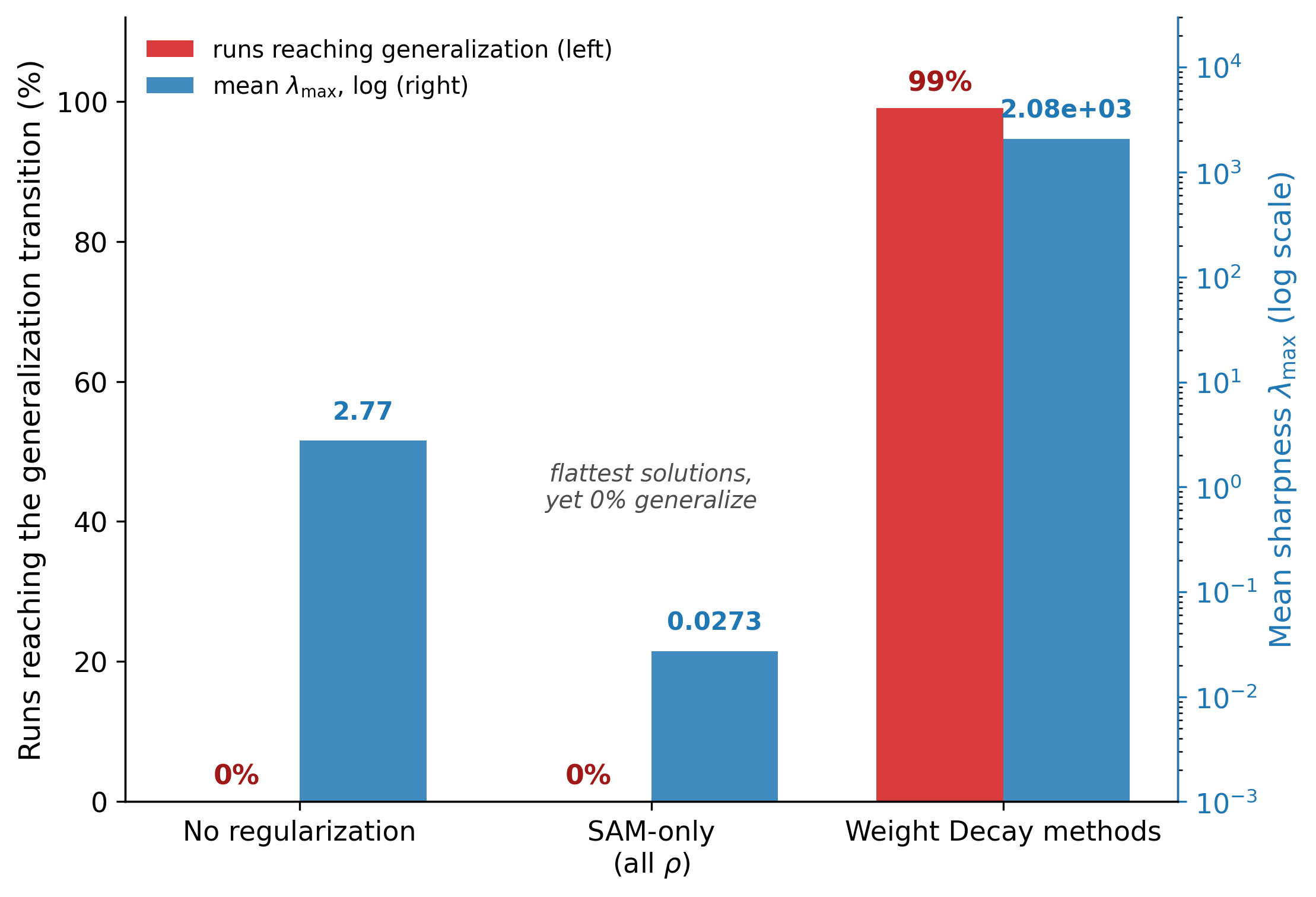}
\caption{
\textbf{Flatness alone does not explain the generalization transition.} Red bars (left axis)
show the percentage of modular arithmetic runs reaching the generalization
transition; blue bars (right axis, log scale) show the mean final-checkpoint
sharpness $\lambda_{\max}$ of the training-loss Hessian. SAM-only finds by far the flattest solutions yet no run generalizes, while
weight-decay-based runs generalize in 99\% of cases despite substantially
sharper final solutions. No regularization pools 20 runs (4 settings
$\times$ 5 seeds); SAM-only pools 180 runs across several values of $\rho$;
Weight Decay methods pool WD and SAM+WD runs across all values of $\rho$ (200 runs).
}
\label{fig:flatness_bars}
\end{figure}

Figure~\ref{fig:flatness_bars} shows our results training all experimental conditions on modular arithmetic tasks. The results demonstrate that SAM-only finds flattest solutions, evidenced by the lowest maximum eigenvalues of the Hessian. Tables~\ref{tab:nanda_full_1}, ~\ref{tab:nanda_full_2} shows that this is true for other flatness metrics as well. Despite being flatter solutions, however, these models do not grok in any setting, regardless of the value of $\rho$ or seed.  This suggests that flatness of the underlying solution by itself is not sufficient to produce the generalization transition. Conversely, weight decay methods often find sharper solutions, and yet exhibit grokking behavior on nearly all settings, achieving high validation accuracy. Thus, our results highlight a lack of any direct link between flatness of the minima and the presence of grokking. These results corroborate our theoretical insights that flatness alone cannot act as a reliable selector between the memorizing and generalizing branches while weight norm can.

\paragraph{Flat Solutions Transition Faster}

\begin{figure*}[t]
\centering
\includegraphics[width=\textwidth]{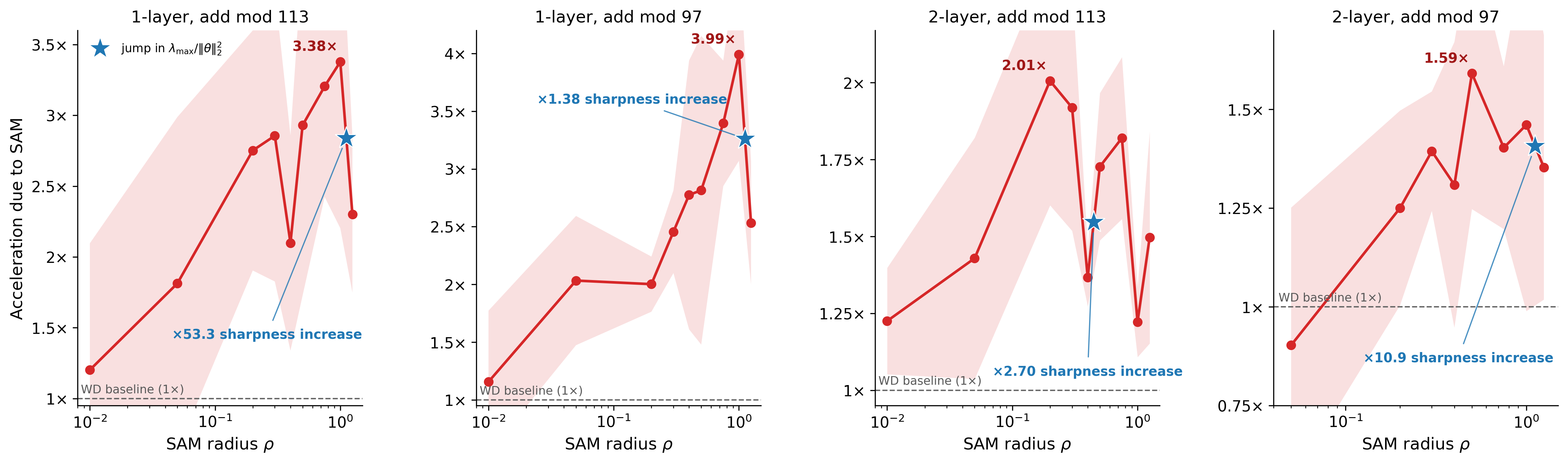}
\caption{
\textbf{SAM accelerates generalization.} Acceleration of the grokking transition under SAM+WD relative to WD alone, as a
function of the SAM radius $\rho$, across Transformer architectures on modular
addition. Acceleration is the median per-seed ratio of the WD grokking epoch to
the SAM+WD grokking epoch; the dashed line marks the WD baseline ($1\times$).
Shaded regions show $\pm 1$ standard deviation of the SAM+WD grokking epoch
across 5 seeds. Blue stars mark increases in
the scale-normalized sharpness $\lambda_{\max}/\|\theta\|_2^2$, indicating where first-order Taylor estimates from SAM begin to degrade and can no longer enforce flatness.
}
\label{fig:mod_arithmetic_accel}
\end{figure*}

Figure~\ref{fig:mod_arithmetic_accel} shows that 
pairing SAM with weight decay substantially accelerates the transition to generalizing solutions across all modular arithmetic tasks. In the addition task with $p=97$ and a 1-layer transformer, for instance, the appropriate value of $\rho$ can achieve up to $\approx4$x epoch-level acceleration compared to using only weight decay. We observe that the acceleration tends to increase with the size of $\rho$, suggesting a relationship between flatness regularization strength and the time it takes to make the generalizing transition. This effect, however, is not indefinite. As $\rho$ grows over a certain threshold, acceleration can start to slow down rapidly. This slowdown may be attributed to the first-order approximation used by SAM to estimate the maximum loss in a local neighborhood, which becomes inaccurate as $\rho$ increases beyond a certain threshold. Appendix~\ref{sec:full_table} supports this, showing that the maximum eigenvalue of the Hessian decreases as $\rho$ increases up to the same approximate threshold that shows a decrease in generalization speed in Figure~\ref{fig:mod_arithmetic_accel}. These results suggest that SAM can no longer meaningfully increase flatness as rho gets too large, which can cause an eventual decline in grokking speed. Together, these results show that moving towards flatter solutions with lower weight norm provides good guidance to the generalizing trajectory. 

\begin{figure*}[t]
\centering

\begin{subfigure}{0.49\textwidth}
    \centering
    \includegraphics[width=\linewidth,height=0.15\textheight]{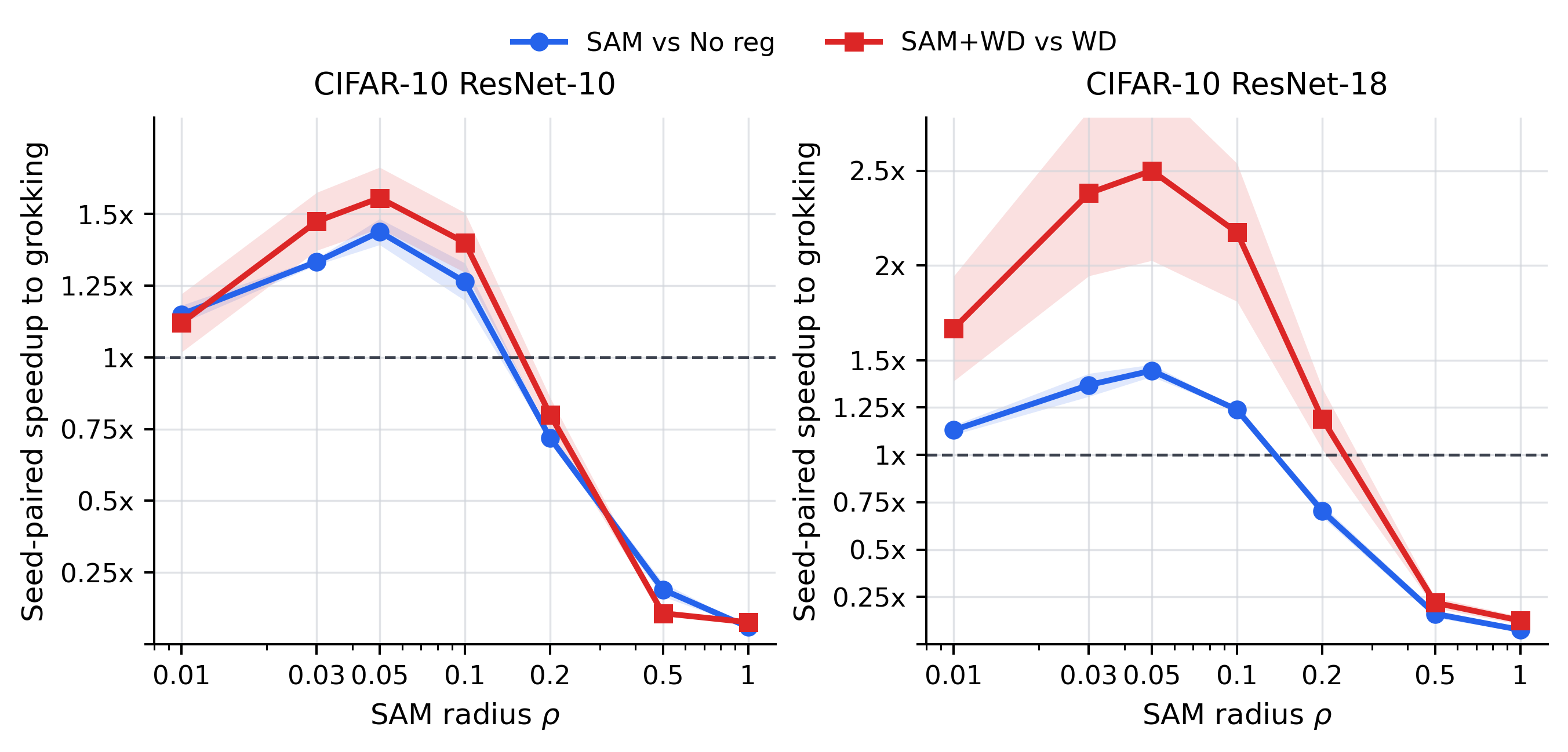}
    \label{fig:cifar_accel}
\end{subfigure}
\hfill
\begin{subfigure}{0.49\textwidth}
    \centering
    \includegraphics[width=\linewidth,height=0.15\textheight]{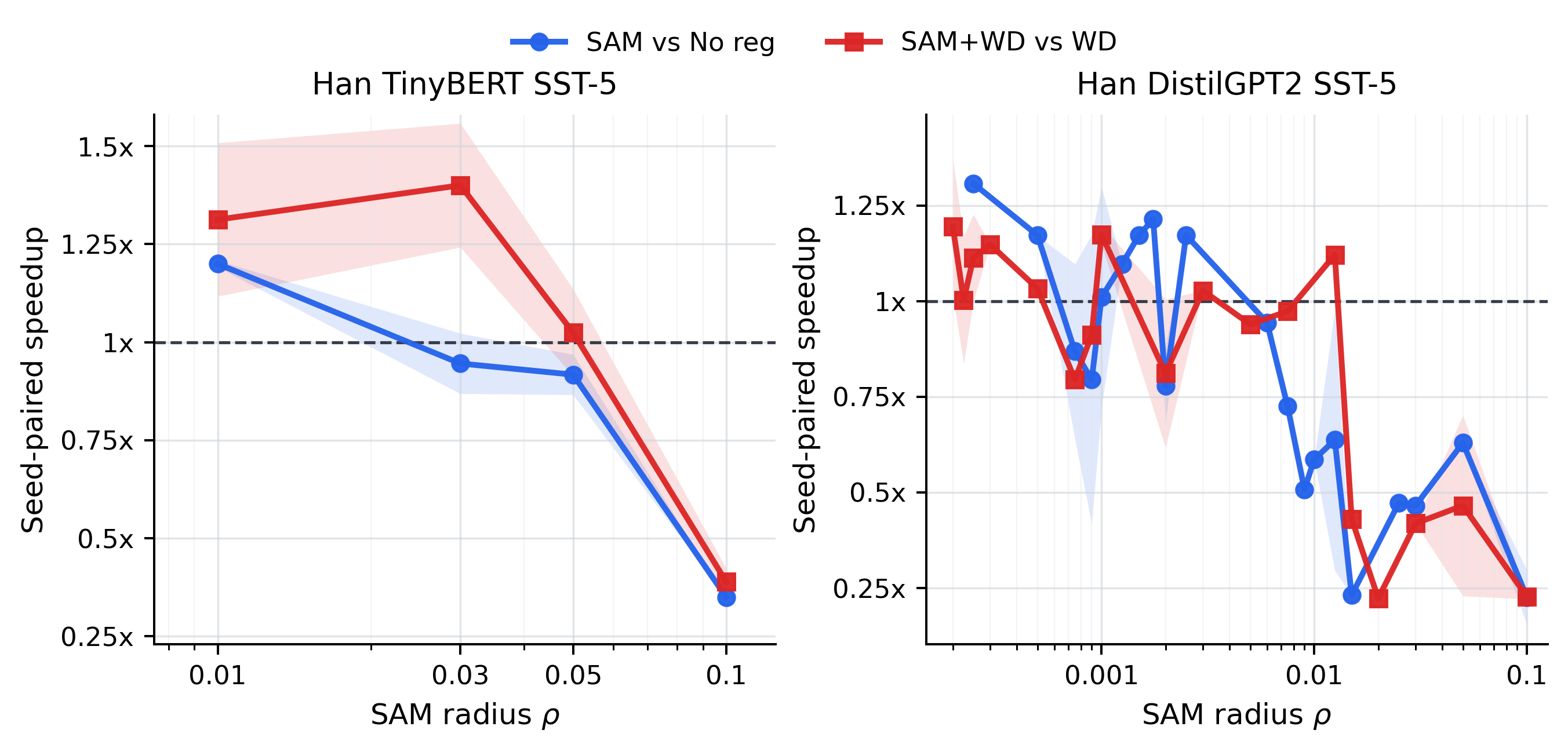}
    \label{fig:language_accel}
\end{subfigure}

\caption{
Relationship between SAM radius $\rho$ and grokking speedup across models trained on CIFAR-10 (left) and SST-5 (right).
Larger $\rho$ generally induces stronger flatness regularization and leads to a faster grokking transition when combined with weight decay until $\rho$ is too large. Shaded regions indicate one standard deviation above and below the median across 3 seeds.
}
\label{fig:han_accel}
\end{figure*}

\begin{figure}[t]
\centering
 \includegraphics[width=0.75\textwidth]{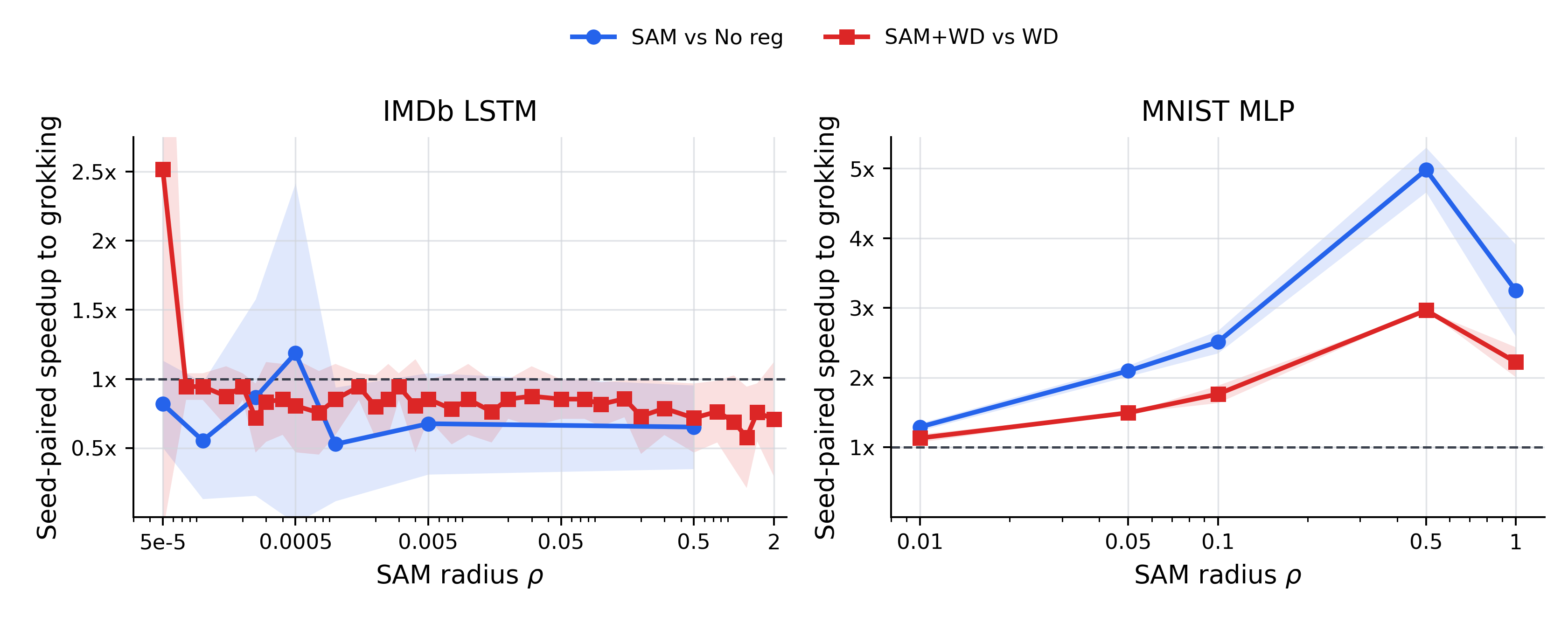}
\caption{Relationship between SAM radius $\rho$ and grokking speedup across models trained on MNIST and IMDB as defined by Omnigrok.
Larger $\rho$ generally induces stronger flatness regularization and leads to faster grokking transition when combined with weight decay until $\rho$ is too large. Shaded regions indicate one standard deviation above and below the median across 3 seeds.}
\label{fig:omnigrok_accel}
\end{figure}

\subsection{Accelerating Generalization on Practical Tasks}

These results indicate broader applicability than on modular arithmetic. Several datasets across vision and language tasks are observed to exhibit grokking-like phenomena. \citet{liu2023omnigrok} demonstrate delayed generalization in grokking through a mismatch between the training and test image, language, and molecular datasets, known as Omnigrok. \citet{han2025flatness} demonstrate delayed grokking through regularization for sharpness across vision and language tasks. We recreate these setups with the addition of SAM, showing that, while difficult to isolate core mechanisms of generalization in these less interpretable datasets, SAM continues the pattern of accelerating this transition when these core mechanisms are already present. Figure~\ref{fig:han_accel} demonstrates the relationship between flatness and the speed of generalization by training ResNets on CIFAR-10 and TinyBERT and DistilGPT2 on SST-5. Figure~\ref{fig:omnigrok_accel} demonstrates this trend on the setup on LSTMs and MLPs used in Omnigrok. Full experimental details are provided in the Appendix.

\subsection{Results}

Our experimental results highlight three key findings across all tasks, including modular arithmetic, image classification, and natural language processing:

\begin{enumerate}
    \item SAM-based training results in solutions that are the flattest, but those solutions fail to generalize in interpretable tasks where other factors are known to induce grokking.
    \item SAM-based training, combined with conditions that reliably induce generalization (e.g. weight decay), makes the transition to generalizing solutions far quicker than with weight decay alone.
    \item The acceleration of the transition to generalization regimes is well explained by  the increase in perturbation radius $\rho$ up to the threshold where SAM-based methods can meaningfully induce flatness.
\end{enumerate}

These results show that seeking flatter minima by itself does not make the transition to the generalizing branch, but accelerates the generalization transition on tasks where grokking already happens (usually through weight decay), corroborating our hypothesis. 

\section{Limitations}
\label{sec:limits}

In this section, we discuss limitations of our work. First, our theoretical analysis that isolates the roles of flatness, norm, and local stability applies to an analytically tractable setting, and does not fully model the extensive architectures used in our experiments, such as Transformers, ResNets, LSTMs, etc. The theoretical analysis also makes assumptions on the loss and the quadratic approximation to SAM that make the separation of memorization and generalization phases transparent, but do not imply that the same stability thresholds or norm comparisons hold unchanged for cross-entropy loss, attention layers, normalization layers, or stochastic minibatch training. Our conclusions are also scoped to the optimization regimes we study. We demonstrate that flatness cannot cause generalization alone through modular arithmetic, a more interpretable task where weight decay has been shown to be key to grokking. In tasks on image classification and natural language processing, we do not show this, because isolating the impact of weight decay and other factors that may cause generalization is significantly more challenging. We do, however, demonstrate that the relationship between the speed of generalization and the strength of flatness regularization through SAM persists on all tasks. Lastly, SAM introduces additional computational costs that may limit its use in practice. Each SAM update requires computing an adversarial perturbation in parameter space, which adds an extra forward-backward pass compared with standard training. This overhead may offset some of the benefits in speed from reaching the generalizing regime in fewer epochs. Our speedup metric focuses on epoch-level transition time rather than total compute, memory cost, or energy usage, as the focus of this paper is on interpretability and understanding the role of flatness, rather than producing new optimizers or regularization. Future work may consider variations of SAM that retain many of the same benefits with lower computational cost \citep{becker2025momentum, ji2024single}.

\section{Conclusion}
\label{sec:conclusion}

In this work, we study whether flatness can act as a causal mechanism for driving generalization in regimes where memorizing solutions are more favored. We revisit the role of flatness in grokking and find that it is not necessary for generalization, challenging existing work. 
Through sharpness-aware minimization, we disentangle the influence of flatness and find that while it cannot reliably drive generalization, it can accelerate the transition significantly. We provide new evidence for a strong relationship between the approximate strength of flatness regularization and the speed of generalization, uncovering new perspectives on the role of flatness that extend meaningfully to practical settings, including computer vision and natural language processing tasks across several architectures.

We provide a theoretical construction that can explain this behavior through an algorithmic separation between flatness, norm, and stability. In a two-layer ReLU model with both memorizing and generalizing interpolants, trace flatness is identical and cannot distinguish them. Parameter norm, however, separates the branches: the generalizing interpolant uses shared structure and has lower norm than the memorizing interpolant once the training set is sufficiently large. This identifies how weight decay can act as the mechanism that selects generalizing structure, while SAM destabilizes memorizing solutions in the low-norm regime, thereby accelerating escape, while being unable to move optimization toward the generalizing branch by itself.
Together, these results better reconcile two threads of research: flatness and grokking, while casting  further light onto causal mechanisms for generalization.

\section{Acknowledgments}
This work was done at Purdue University. We thank Professor Rajiv Khanna and Young In Kim at Purdue University for their valuable insights, especially towards developing theoretical models to explain these results.

\bibliographystyle{exactrefs}
\bibliography{references}
\nocite{*}

\clearpage
\appendix
\raggedbottom

\section{Appendix}

\subsection{A note on Relative Flatness~\citep{han2025flatness}}
\label{sec:han_bad}
In addition to measuring sharpness using methods based on the maximum eigenvalue of the Hessian during training, we report sharpness on the relative flatness metrics used by prior works~\cite{han2025flatness}.

Let \(p_\theta(x)\) denote the softmax probability vector, \(\phi_\theta(x)\) the penultimate representation used by the final classifier, and \(W\) the final unembedding/classifier matrix. The base final-layer Hessian factor is
\[
\mathrm{HanH}=\frac{1}{n}\sum_{i=1}^n
\left(1-\|p_\theta(x_i)\|_2^2\right)\|\phi_\theta(x_i)\|_2^2 .
\]
Relative flatness is defined as
\[
\mathrm{HanRF}=\|W\|_F^2\,\mathrm{HanH}.
\]

While this metric may not be as consistent as the other flatness metrics we use, our results do not necessarily contradict \citet{han2025flatness}, due to this metric not being invariant to several function-preserving changes in the classifier. For example, this quantity can change substantially under changes in confidence or scaling even though the final induced classifier is unchanged. Thus, an observed increase in this metric along a grokking trajectory does not by itself imply that flatness is the causal variable selecting the generalizing solution.

\subsection{Experimental Details}
\label{sec:exp_details}

\paragraph{Modular Arithmetic} For modular arithmetic tasks, we consider $(a + b) \bmod p$ for $p \in \{97, 113\}$. For each setting, all $p(p-1)/2$ unordered pairs are split 30\%/70\% into train and validation sets with a fixed seed. These pairs are generated synthetically through uniform sampling. Input tokens are $[a, b, \texttt{=}]$; the target is the residue. We employ a single-layer Transformer ($d_{\text{model}}{=}128$, 4 heads, $d_{\text{mlp}}{=}512$, $\sim$227K parameters), and a 2-layer Transformer with the same width ($\sim$427K parameters). All experiments use full-batch training, learning rate $10^{-3}$, and AdamW~\citep{loshchilov2019decoupled}, and cross-entropy loss. This experimental setup is an adaptation of existing work by \cite{nanda2023progress}.

For SAM-based methods (SAM-only and SAM with weight decay), we use the following values of $\rho$: \{0.01, 0.05, 0.2, 0.3, 0.4, 0.5, 0.75, 1.0, 1.25\}. We consider a model to have fully transitioned once it has reached a generalizing interpolating solution with both training and validation accuracy above 99\%.

\paragraph{Relative Flatness in Grokking}
\citep{han2025flatness} demonstrates delayed grokking through regularization for sharpness. We recreate several of their tasks to demonstrate specific ways flatness can be used to actually accelerate their grokking on image and language tasks.

We train ResNet-10s and ResNet-18s on CIFAR-10 and fine-tune DistilGPT2 and TinyBERT on SST-5 as outlined in their work. The threshold to determine when a generalizing interpolant is reached is 99\% training set accuracy, 75\% validation set accuracy for CIFAR-10, and 99\% training set accuracy and 45\% validation set accuracy for language tasks on SST-5.

\paragraph{Omnigrok} Omnigrok demonstrates delayed generalization in grokking through a mismatch between the training and test image, language, and molecular datasets. We recreate their setup and demonstrate the flatness-generalization relationship on their image and language tasks.

Specifically, we train MLPs on MNIST and LSTMs on IMDB as they outline and pair them with SAM. The respective thresholds to determine if a generalizing interpolant is reached are 60\% and 54\%, respectively.

\subsection{Proofs}
\label{app:proofs}

\begin{proof}[Proof of Theorem~\ref{thm:main_separation}]
We prove the three claims in order:

\noindent\textbf{1. Flatness does not separate.}
For the memorizing interpolant, recall that for each training point $x_i$, $w_i=r x_i, b_i=-r(d-1), a_i=\frac{y_i}{r}.$ At $x_i$, 
$
w_i^\top x_i+b_i
=
r x_i^\top x_i-r(d-1)
=
rd-r(d-1)
=
r>0.
$
For any $x\neq x_i$ in $\{-1,+1\}^d$, we have $x_i^\top x\le d-2$, and hence
$
w_i^\top x+b_i
\le r(d-2)-r(d-1)
=
-r<0.
$
Thus neuron $i$ activates only on $x_i$, and
$
f_{\theta_{\rm mem}}(x_i)=a_i r=y_i.
$
So the memorizing construction interpolates the training set.

For the $(C,r)$-generalizing interpolant, 
$
z=(x_1,\ldots,x_C).
$
The neuron indexed by this $z$ satisfies
$
w_z^\top x+b_z
=
r\sum_{k=1}^C z_kx_k-r(C-1)
=
rC-r(C-1)
=
r>0.
$
Every other pattern $z'\neq z$ differs in at least one coordinate, so
$
\sum_{k=1}^C z'_kx_k\le C-2,
$
and hence
$
w_{z'}^\top x+b_{z'}
\le r(C-2)-r(C-1)
=
-r<0.
$
Therefore exactly one neuron activates, and
$
f_{\theta_{C,r}}(x)
=
a_z r
=
\frac{z_1z_2}{r}r
=
z_1z_2
=
x_1x_2.
$
Thus $\theta_{C,r}$ computes the target rule on the full hypercube.

At an interpolating differentiable solution for squared loss,
\[
H(\theta)
=
\frac{1}{n}\sum_{i=1}^n g_i g_i^\top,
\qquad
g_i:=\nabla_\theta f_\theta(x_i).
\]
Therefore
\[
\operatorname{Tr}H(\theta)
=
\frac{1}{n}\sum_{i=1}^n \|g_i\|_2^2.
\]
For both constructions, exactly one neuron is active on $x_i$. For the active neuron,
$
\frac{\partial f}{\partial a}=r,
\frac{\partial f}{\partial w}=a x_i=\frac{y_i}{r}x_i,
\frac{\partial f}{\partial b}=a=\frac{y_i}{r}.
$
Thus
\[
\|g_i\|_2^2
=
r^2+\frac{\|x_i\|_2^2}{r^2}+\frac{1}{r^2}
=
r^2+\frac{d+1}{r^2}
=
q_d(r).
\]
Averaging over $i$ gives
\[
\operatorname{Tr}H(\theta_{\rm mem}(r))
=
\operatorname{Tr}H(\theta_{C,r})
=
q_d(r).
\]

\noindent\textbf{2. Norm separates.}
Define
$
A_d:=d+(d-1)^2,
A_C:=C+(C-1)^2.
$
For the memorizing interpolant, each of the $n$ neurons satisfies
$
\|w_i\|_2^2=r^2d,
b_i^2=r^2(d-1)^2,
a_i^2=r^{-2}.
$
Hence
\[
\|\theta_{\rm mem}(r)\|_2^2
=
n(A_d r^2+r^{-2}).
\]
For the $(C,r)$-generalizing interpolant, there are $2^C$ neurons, each satisfying
$
\|w_z\|_2^2=r^2C,
b_z^2=r^2(C-1)^2,
a_z^2=r^{-2}.
$
Thus,
$
\|\theta_{C,r}\|_2^2
=
2^C(A_C r^2+r^{-2}).
$
Therefore if $
n>
\frac{2^C(A_C r^2+r^{-2})}{A_d r^2+r^{-2}}.
$,
\[
\|\theta_{C,r}\|_2^2 < \|\theta_{\rm mem}(r)\|_2^2.
\]

\noindent\textbf{3. SAM destabilizes memorization in a low-norm stability window.}
For the memorizing interpolant, the gradients $g_i$ have disjoint parameter support, so they are mutually orthogonal. Since
$
\|g_i\|_2^2=q_d(r),
$
the nonzero eigenvalues of
$
H(\theta_{\rm mem}(r))
=
\frac{1}{n}\sum_{i=1}^n g_i g_i^\top
$
are all equal to
$
\ell_{\rm mem}(r)
:=
\frac{q_d(r)}{n}.
$
Thus
$
\lambda_{\max}(H(\theta_{\rm mem}(r)))=\ell_{\rm mem}(r).
$

Under GD, the `top-mode' multiplier is
$
m_{\rm GD}(r)=1-\eta \ell_{\rm mem}(r).
$
Therefore GD is locally stable iff
$
|m_{\rm GD}(r)|<1,
$
equivalently
$
\ell_{\rm mem}(r)<\ell_{\rm GD}:=\frac{2}{\eta}.
$

Under the quadratic SAM approximation,
$
H_{\rm SAM}=H(I+\gamma H),
\gamma=\rho/\alpha>0.
$
Thus the effective top curvature is
$
\ell_{\rm mem}(r)(1+\gamma \ell_{\rm mem}(r)),
$
and the top-mode SAM multiplier is
$
m_{\rm SAM}(r)
=
1-\eta \ell_{\rm mem}(r)(1+\gamma \ell_{\rm mem}(r)).
$
SAM is locally unstable whenever
$
|m_{\rm SAM}(r)|>1,
$
which holds when
$
\eta \ell_{\rm mem}(r)(1+\gamma \ell_{\rm mem}(r))>2.
$
The SAM stability threshold is the positive solution of
$
\eta \ell(1+\gamma \ell)=2,
$
namely
$
\ell_{\rm SAM}
=
\frac{\sqrt{1+8\gamma/\eta}-1}{2\gamma}.
$
Since $\gamma>0$,
$
0<\ell_{\rm SAM}<\ell_{\rm GD}.
$

Therefore, whenever
$
\ell_{\rm SAM}
<
\ell_{\rm mem}(r)
<
\ell_{\rm GD},
$
we have
$
|m_{\rm GD}(r)|<1
\text{ and }
|m_{\rm SAM}(r)|>1.
$
Thus the memorizing interpolant is locally stable under GD but locally unstable under SAM. Perturbations in the top Hessian eigendirection decay geometrically under GD and grow geometrically under SAM.

Equivalently, this condition defines a nonempty low-norm stability window whenever the equation
$
\frac{q_d(r)}{n}\in(\ell_{\rm SAM},\ell_{\rm GD})
$
has solutions. On the low-\(r\) branch, this corresponds to an interval
$
r\in(r_{\rm low},r_{\rm high}).
$
\end{proof}

\subsection{Full Tables, Grokking in Modular Arithmetic}
\label{sec:full_table}

We provide a table with all modular arithmetic results and three sharpness metrics, including a parameter-normalized variant to the maximum eigenvalue of the Hessian, and the sharpness metrics discussed in other works \cite{han2025flatness}.

\begin{table}[H]
\centering
\scriptsize
\setlength{\tabcolsep}{2.5pt}
\renewcommand{\arraystretch}{0.93}
\textbf{(a) 1-layer Transformer, add mod 113}\par\vspace{0.25em}
\resizebox{\linewidth}{!}{%
\begin{tabular}{llccccccc}
\toprule
Condition & $\rho$ & Grok & Grok epoch & Val. acc. & Speedup & Han RF $\downarrow$ & $\lambda_{\max}^{50}\downarrow$ & $\lambda_{\max}^{50}/\|\theta\|_2^2\downarrow$ \\
\midrule
No reg & -- & 0/5 & -- & 0.307 $\pm$ 0.0242 & -- & 48.02 $\pm$ 37.47 & 0.00213 $\pm$ 0.00157 & 4.54e-07 $\pm$ 3.37e-07 \\
WD & -- & 5/5 & 5538 $\pm$ 2976.6 & 1 $\pm$ 4.06e-04 & 1x & 9.247 $\pm$ 10.17 & 0.0547 $\pm$ 0.0553 & 1.25e-05 $\pm$ 7.95e-06 \\
SAM-only & 0.01 & 0/5 & -- & 0.307 $\pm$ 0.0144 & -- & 8.757 $\pm$ 3.058 & 6.67e-04 $\pm$ 2.51e-04 & 1.56e-07 $\pm$ 6.39e-08 \\
SAM-only & 0.05 & 0/5 & -- & 0.351 $\pm$ 0.0711 & -- & 1.026 $\pm$ 0.483 & 1.58e-04 $\pm$ 9.46e-05 & 4.08e-08 $\pm$ 2.42e-08 \\
SAM-only & 0.2 & 0/5 & -- & 0.433 $\pm$ 0.144 & -- & 5.152 $\pm$ 6.666 & 8.43e-04 $\pm$ 0.00126 & 2.39e-07 $\pm$ 3.61e-07 \\
SAM-only & 0.3 & 0/5 & -- & 0.413 $\pm$ 0.172 & -- & 10.99 $\pm$ 15.36 & 0.00177 $\pm$ 0.00263 & 5.02e-07 $\pm$ 7.56e-07 \\
SAM-only & 0.4 & 0/5 & -- & 0.396 $\pm$ 0.0949 & -- & 0.936 $\pm$ 0.984 & 1.37e-04 $\pm$ 1.68e-04 & 3.91e-08 $\pm$ 4.92e-08 \\
SAM-only & 0.5 & 0/5 & -- & 0.424 $\pm$ 0.13 & -- & 0.275 $\pm$ 0.156 & 2.95e-05 $\pm$ 7.79e-06 & 7.76e-09 $\pm$ 1.73e-09 \\
SAM-only & 0.75 & 0/5 & -- & 0.449 $\pm$ 0.142 & -- & 2.656 $\pm$ 2.788 & 2.46e-04 $\pm$ 2.54e-04 & 5.62e-08 $\pm$ 5.98e-08 \\
SAM-only & 1.25 & 0/5 & -- & 0.421 $\pm$ 0.0948 & -- & 0.109 $\pm$ 0.0897 & 1.26e-05 $\pm$ 4.99e-06 & 2.25e-09 $\pm$ 7.91e-10 \\
SAM+WD & 0.01 & 5/5 & 4607.3 $\pm$ 3427.5 & 0.999 $\pm$ 8.28e-04 & 1.202x & 1754.0 $\pm$ 3037.0 & 3.667 $\pm$ 6.348 & 0.00144 $\pm$ 0.00249 \\
SAM+WD & 0.05 & 5/5 & 3054.6 $\pm$ 1979.6 & 1 $\pm$ 2.00e-04 & 1.813x & 45.89 $\pm$ 51.04 & 0.115 $\pm$ 0.143 & 4.06e-05 $\pm$ 4.89e-05 \\
SAM+WD & 0.2 & 5/5 & 2012.4 $\pm$ 620.5 & 1 $\pm$ 0 & 2.752x & 430.9 $\pm$ 718.1 & 0.403 $\pm$ 0.635 & 3.71e-04 $\pm$ 6.15e-04 \\
SAM+WD & 0.3 & 5/5 & 1939.1 $\pm$ 698.6 & 1 $\pm$ 1.00e-04 & 2.856x & 1729.5 $\pm$ 1634.5 & 1.787 $\pm$ 1.111 & 0.00152 $\pm$ 0.00147 \\
SAM+WD & 0.4 & 5/5 & 2638.4 $\pm$ 956.3 & 1 $\pm$ 9.36e-05 & 2.099x & 1188.8 $\pm$ 891.5 & 1.055 $\pm$ 0.637 & 9.22e-04 $\pm$ 6.98e-04 \\
SAM+WD & 0.5 & 5/5 & 1890.1 $\pm$ 775.9 & 1 $\pm$ 3.88e-04 & 2.93x & 1561.5 $\pm$ 959.4 & 1.336 $\pm$ 0.54 & 0.00105 $\pm$ 4.99e-04 \\
SAM+WD & 0.75 & 5/5 & 1727.9 $\pm$ 421.2 & 1 $\pm$ 2.79e-04 & 3.205x & 1599.1 $\pm$ 634.5 & 1.234 $\pm$ 0.449 & 0.00116 $\pm$ 5.46e-04 \\
SAM+WD & 1 & 5/5 & 1639.4 $\pm$ 569.6 & 0.999 $\pm$ 0.00126 & 3.378x & 2072.3 $\pm$ 1668.0 & 1.27 $\pm$ 1.104 & 0.00124 $\pm$ 0.00117 \\
SAM+WD & 1.25 & 5/5 & 2406.8 $\pm$ 578.3 & 1 $\pm$ 6.50e-04 & 2.301x & 1.43e+04 $\pm$ 1.28e+04 & 66.05 $\pm$ 91.25 & 0.0661 $\pm$ 0.0886 \\
\bottomrule
\end{tabular}%
}
\vspace{0.75em}
\vspace{0.75em}
\textbf{(b) 1-layer Transformer, add mod 97}\par\vspace{0.25em}
\resizebox{\linewidth}{!}{%
\begin{tabular}{llccccccc}
\toprule
Condition & $\rho$ & Grok & Grok epoch & Val. acc. & Speedup & Han RF $\downarrow$ & $\lambda_{\max}^{50}\downarrow$ & $\lambda_{\max}^{50}/\|\theta\|_2^2\downarrow$ \\
\midrule
No reg & -- & 0/5 & -- & 0.259 $\pm$ 0.0619 & -- & 26.82 $\pm$ 24.49 & 0.00169 $\pm$ 0.00137 & 3.91e-07 $\pm$ 3.15e-07 \\
WD & -- & 5/5 & 11169 $\pm$ 4739.6 & 0.999 $\pm$ 0.00211 & 1x & 15.21 $\pm$ 17.93 & 0.157 $\pm$ 0.209 & 7.04e-05 $\pm$ 1.03e-04 \\
SAM-only & 0.01 & 0/5 & -- & 0.279 $\pm$ 0.0421 & -- & 24.60 $\pm$ 5.632 & 0.00157 $\pm$ 6.41e-04 & 3.89e-07 $\pm$ 1.72e-07 \\
SAM-only & 0.05 & 0/5 & -- & 0.299 $\pm$ 0.0108 & -- & 0.619 $\pm$ 0.228 & 9.26e-05 $\pm$ 2.48e-05 & 2.52e-08 $\pm$ 8.01e-09 \\
SAM-only & 0.2 & 0/5 & -- & 0.303 $\pm$ 0.00707 & -- & 18.16 $\pm$ 14.52 & 0.00195 $\pm$ 0.00152 & 5.21e-07 $\pm$ 3.99e-07 \\
SAM-only & 0.3 & 0/5 & -- & 0.306 $\pm$ 0.0127 & -- & 0.452 $\pm$ 0.478 & 5.40e-05 $\pm$ 4.28e-05 & 1.51e-08 $\pm$ 1.29e-08 \\
SAM-only & 0.4 & 0/5 & -- & 0.309 $\pm$ 0.00878 & -- & 0.125 $\pm$ 0.0732 & 2.20e-05 $\pm$ 7.51e-06 & 6.15e-09 $\pm$ 2.43e-09 \\
SAM-only & 0.5 & 0/5 & -- & 0.308 $\pm$ 0.0063 & -- & 2.347 $\pm$ 2.285 & 2.80e-04 $\pm$ 2.89e-04 & 7.34e-08 $\pm$ 7.84e-08 \\
SAM-only & 0.75 & 0/5 & -- & 0.361 $\pm$ 0.0897 & -- & 0.257 $\pm$ 0.318 & 3.63e-05 $\pm$ 3.39e-05 & 8.71e-09 $\pm$ 8.62e-09 \\
SAM-only & 1.25 & 0/5 & -- & 0.34 $\pm$ 0.0316 & -- & 0.0605 $\pm$ 0.0828 & 9.66e-06 $\pm$ 7.80e-06 & 1.74e-09 $\pm$ 1.37e-09 \\
SAM+WD & 0.01 & 5/5 & 9661.8 $\pm$ 5151.3 & 0.999 $\pm$ 0.00132 & 1.156x & 1935.4 $\pm$ 3348.5 & 6.506 $\pm$ 11.26 & 0.00564 $\pm$ 0.00976 \\
SAM+WD & 0.05 & 5/5 & 5493.9 $\pm$ 1512.5 & 1 $\pm$ 3.96e-04 & 2.033x & 16.76 $\pm$ 24.79 & 0.0498 $\pm$ 0.0741 & 1.46e-05 $\pm$ 1.94e-05 \\
SAM+WD & 0.2 & 5/5 & 5578.9 $\pm$ 666.1 & 1 $\pm$ 3.96e-04 & 2.002x & 670.6 $\pm$ 407.0 & 1.376 $\pm$ 0.318 & 5.93e-04 $\pm$ 2.78e-04 \\
SAM+WD & 0.3 & 5/5 & 4547.6 $\pm$ 663.4 & 0.999 $\pm$ 3.33e-04 & 2.456x & 933.2 $\pm$ 441.5 & 1.38 $\pm$ 0.967 & 9.20e-04 $\pm$ 7.22e-04 \\
SAM+WD & 0.4 & 5/5 & 4024.9 $\pm$ 1689.4 & 1 $\pm$ 4.07e-04 & 2.775x & 1266.5 $\pm$ 609.6 & 1.679 $\pm$ 1.297 & 0.0014 $\pm$ 0.00117 \\
SAM+WD & 0.5 & 5/5 & 3967.7 $\pm$ 1884.3 & 1 $\pm$ 4.60e-04 & 2.815x & 758.9 $\pm$ 310.3 & 1.059 $\pm$ 0.425 & 8.89e-04 $\pm$ 3.05e-04 \\
SAM+WD & 0.75 & 5/5 & 3289.8 $\pm$ 527.1 & 0.998 $\pm$ 0.00286 & 3.395x & 784.1 $\pm$ 327.2 & 0.919 $\pm$ 0.338 & 9.50e-04 $\pm$ 4.24e-04 \\
SAM+WD & 1 & 5/5 & 2797.8 $\pm$ 645.9 & 1 $\pm$ 4.60e-04 & 3.992x & 741.0 $\pm$ 148.0 & 0.731 $\pm$ 0.12 & 7.23e-04 $\pm$ 8.28e-05 \\
SAM+WD & 1.25 & 5/5 & 4411.1 $\pm$ 930.3 & 1 $\pm$ 2.15e-04 & 2.532x & 1119.3 $\pm$ 77.79 & 0.95 $\pm$ 0.0667 & 0.001 $\pm$ 3.11e-05 \\
\bottomrule
\end{tabular}%
}
\vspace{0.75em}
\caption{Modular-addition results for 1-layer Transformers. Grokking and validation summaries use matched five-seed runs. Speedup is the median paired-seed SAM+WD speedup over WD. Han RF is the official Han et al. relative-flatness metric. Hessian columns use the previously computed three-seed Hessian-50 posthoc table matched by task, condition, and $\rho$.}
\label{tab:nanda_full_1}
\end{table}

\begin{table}[H]
\centering
\scriptsize
\setlength{\tabcolsep}{2.5pt}
\renewcommand{\arraystretch}{0.93}
\textbf{(c) 2-layer Transformer, add mod 113}\par\vspace{0.25em}
\resizebox{\linewidth}{!}{%
\begin{tabular}{llccccccc}
\toprule
Condition & $\rho$ & Grok & Grok epoch & Val. acc. & Speedup & Han RF $\downarrow$ & $\lambda_{\max}^{50}\downarrow$ & $\lambda_{\max}^{50}/\|\theta\|_2^2\downarrow$ \\
\midrule
No reg & -- & 0/5 & -- & 0.311 $\pm$ 0.032 & -- & 1.23e+04 $\pm$ 2.12e+04 & 7.04 $\pm$ 12.14 & 0.00107 $\pm$ 0.00184 \\
WD & -- & 5/5 & 11266 $\pm$ 9705.0 & 0.876 $\pm$ 0.174 & 1x & 6.44e+06 $\pm$ 1.99e+06 & 5.91e+04 $\pm$ 2.06e+04 & 83.45 $\pm$ 63.48 \\
SAM-only & 0.01 & 0/5 & -- & 0.41 $\pm$ 0.171 & -- & 3.877 $\pm$ 2.619 & 0.00403 $\pm$ 0.00223 & 7.42e-07 $\pm$ 4.63e-07 \\
SAM-only & 0.05 & 0/5 & -- & 0.44 $\pm$ 0.177 & -- & 900.7 $\pm$ 1558.1 & 0.48 $\pm$ 0.826 & 7.57e-05 $\pm$ 1.30e-04 \\
SAM-only & 0.2 & 0/5 & -- & 0.39 $\pm$ 0.119 & -- & 2.052 $\pm$ 1.191 & 0.00105 $\pm$ 6.43e-04 & 1.75e-07 $\pm$ 1.05e-07 \\
SAM-only & 0.3 & 0/5 & -- & 0.383 $\pm$ 0.101 & -- & 13.66 $\pm$ 20.79 & 0.00301 $\pm$ 0.00365 & 4.87e-07 $\pm$ 5.87e-07 \\
SAM-only & 0.4 & 0/5 & -- & 0.32 $\pm$ 0.029 & -- & 1.164 $\pm$ 1.013 & 4.32e-04 $\pm$ 3.01e-04 & 6.39e-08 $\pm$ 4.44e-08 \\
SAM-only & 0.5 & 0/5 & -- & 0.342 $\pm$ 0.0796 & -- & 3.86 $\pm$ 4.179 & 8.53e-04 $\pm$ 6.96e-04 & 1.17e-07 $\pm$ 9.60e-08 \\
SAM-only & 0.75 & 0/5 & -- & 0.328 $\pm$ 0.0572 & -- & 0.431 $\pm$ 0.586 & 1.48e-04 $\pm$ 1.58e-04 & 1.82e-08 $\pm$ 1.87e-08 \\
SAM-only & 1.25 & 0/5 & -- & 0.301 $\pm$ 0.00727 & -- & 75.43 $\pm$ 130.5 & 0.121 $\pm$ 0.21 & 9.81e-06 $\pm$ 1.70e-05 \\
SAM+WD & 0.01 & 4/5 & 9196.7 $\pm$ 1299.2 & 0.995 $\pm$ 0.0116 & 1.225x & 4.23e+04 $\pm$ 4.08e+04 & 275.3 $\pm$ 342.9 & 0.439 $\pm$ 0.547 \\
SAM+WD & 0.05 & 5/5 & 7883.8 $\pm$ 2164.2 & 1 $\pm$ 2.90e-04 & 1.429x & 5413.5 $\pm$ 4751.9 & 15.15 $\pm$ 9.783 & 0.0267 $\pm$ 0.022 \\
SAM+WD & 0.2 & 5/5 & 5616.2 $\pm$ 1133.3 & 1 $\pm$ 1.23e-04 & 2.006x & 1675.8 $\pm$ 814.8 & 5.077 $\pm$ 0.31 & 0.00841 $\pm$ 8.15e-04 \\
SAM+WD & 0.3 & 5/5 & 5870.8 $\pm$ 1226.2 & 1 $\pm$ 3.68e-04 & 1.919x & 938.9 $\pm$ 172.3 & 2.184 $\pm$ 0.806 & 0.00339 $\pm$ 0.00147 \\
SAM+WD & 0.4 & 5/5 & 8241.4 $\pm$ 569.7 & 1 $\pm$ 4.11e-04 & 1.367x & 1076.5 $\pm$ 33.77 & 2.144 $\pm$ 0.407 & 0.00313 $\pm$ 6.04e-04 \\
SAM+WD & 0.5 & 5/5 & 6523.5 $\pm$ 904.6 & 0.999 $\pm$ 3.79e-04 & 1.727x & 4229.5 $\pm$ 6177.6 & 6.003 $\pm$ 7.931 & 0.00845 $\pm$ 0.0109 \\
SAM+WD & 0.75 & 5/5 & 6190.1 $\pm$ 894.6 & 0.999 $\pm$ 1.87e-04 & 1.82x & 798.0 $\pm$ 104.0 & 1.515 $\pm$ 0.281 & 0.00215 $\pm$ 4.90e-04 \\
SAM+WD & 1 & 5/5 & 9219.3 $\pm$ 860.7 & 1 $\pm$ 3.26e-04 & 1.222x & 4000.0 $\pm$ 3268.2 & 7.145 $\pm$ 6.088 & 0.00972 $\pm$ 0.00934 \\
SAM+WD & 1.25 & 5/5 & 7525.7 $\pm$ 1732.6 & 0.95 $\pm$ 0.11 & 1.497x & 1.21e+04 $\pm$ 8775.4 & 6.843 $\pm$ 3.611 & 0.00787 $\pm$ 0.00576 \\
\bottomrule
\end{tabular}%
}
\vspace{0.75em}
\vspace{0.75em}
\textbf{(d) 2-layer Transformer, add mod 97}\par\vspace{0.25em}
\resizebox{\linewidth}{!}{%
\begin{tabular}{llccccccc}
\toprule
Condition & $\rho$ & Grok & Grok epoch & Val. acc. & Speedup & Han RF $\downarrow$ & $\lambda_{\max}^{50}\downarrow$ & $\lambda_{\max}^{50}/\|\theta\|_2^2\downarrow$ \\
\midrule
No reg & -- & 0/5 & -- & 0.345 $\pm$ 0.0369 & -- & 1669.3 $\pm$ 1582.0 & 4.054 $\pm$ 3.549 & 7.02e-04 $\pm$ 6.27e-04 \\
WD & -- & 4/5 & 6770 $\pm$ 1526.4 & 0.821 $\pm$ 0.133 & 1x & 3.84e+06 $\pm$ 3.58e+06 & 2.33e+04 $\pm$ 3.02e+04 & 20.06 $\pm$ 21.73 \\
SAM-only & 0.01 & 0/5 & -- & 0.311 $\pm$ 0.0102 & -- & 213.0 $\pm$ 304.3 & 0.339 $\pm$ 0.402 & 6.14e-05 $\pm$ 7.38e-05 \\
SAM-only & 0.05 & 0/5 & -- & 0.329 $\pm$ 0.0212 & -- & 6.473 $\pm$ 4.836 & 0.00978 $\pm$ 0.00722 & 2.01e-06 $\pm$ 1.67e-06 \\
SAM-only & 0.2 & 0/5 & -- & 0.324 $\pm$ 0.0394 & -- & 5.838 $\pm$ 6.646 & 0.00533 $\pm$ 0.00553 & 9.62e-07 $\pm$ 1.04e-06 \\
SAM-only & 0.3 & 0/5 & -- & 0.303 $\pm$ 0.0106 & -- & 0.402 $\pm$ 0.396 & 1.92e-04 $\pm$ 1.51e-04 & 3.16e-08 $\pm$ 2.38e-08 \\
SAM-only & 0.4 & 0/5 & -- & 0.302 $\pm$ 0.0096 & -- & 0.3 $\pm$ 0.384 & 2.72e-04 $\pm$ 2.49e-04 & 4.28e-08 $\pm$ 3.83e-08 \\
SAM-only & 0.5 & 0/5 & -- & 0.298 $\pm$ 0.00274 & -- & 0.885 $\pm$ 1.209 & 0.00102 $\pm$ 0.00152 & 1.51e-07 $\pm$ 2.29e-07 \\
SAM-only & 0.75 & 0/5 & -- & 0.296 $\pm$ 0.0018 & -- & 0.309 $\pm$ 0.43 & 1.25e-04 $\pm$ 1.59e-04 & 1.58e-08 $\pm$ 1.98e-08 \\
SAM-only & 1.25 & 0/5 & -- & 0.297 $\pm$ 0.00145 & -- & 1.696 $\pm$ 2.799 & 8.37e-04 $\pm$ 0.00122 & 7.40e-08 $\pm$ 1.07e-07 \\
SAM+WD & 0.01 & 5/5 & 5572.0 $\pm$ 1413.1 & 0.993 $\pm$ 0.0158 & 1.215x & 1.33e+04 $\pm$ 6963.0 & 189.2 $\pm$ 211.1 & 0.166 $\pm$ 0.12 \\
SAM+WD & 0.05 & 5/5 & 7497.2 $\pm$ 2891.2 & 0.999 $\pm$ 1.98e-04 & 0.903x & 5149.1 $\pm$ 3155.8 & 15.29 $\pm$ 7.406 & 0.0286 $\pm$ 0.0154 \\
SAM+WD & 0.2 & 5/5 & 5416.0 $\pm$ 1068.8 & 1 $\pm$ 3.04e-04 & 1.25x & 3712.1 $\pm$ 4173.1 & 14.14 $\pm$ 17.51 & 0.0236 $\pm$ 0.0287 \\
SAM+WD & 0.3 & 5/5 & 4856.5 $\pm$ 526.9 & 1 $\pm$ 3.72e-04 & 1.394x & 1070.3 $\pm$ 352.8 & 2.694 $\pm$ 0.732 & 0.00421 $\pm$ 0.00107 \\
SAM+WD & 0.4 & 5/5 & 5171.9 $\pm$ 1428.9 & 1 $\pm$ 1.73e-04 & 1.309x & 2177.1 $\pm$ 2167.2 & 7.391 $\pm$ 7.756 & 0.0109 $\pm$ 0.0108 \\
SAM+WD & 0.5 & 5/5 & 4255.2 $\pm$ 918.8 & 1 $\pm$ 2.63e-04 & 1.591x & 2191.7 $\pm$ 1788.9 & 5.266 $\pm$ 3.768 & 0.00745 $\pm$ 0.00445 \\
SAM+WD & 0.75 & 5/5 & 4825.4 $\pm$ 708.8 & 0.999 $\pm$ 3.62e-04 & 1.403x & 5992.0 $\pm$ 7701.6 & 11.68 $\pm$ 15.58 & 0.02 $\pm$ 0.0282 \\
SAM+WD & 1 & 5/5 & 4633.8 $\pm$ 1497.1 & 1 $\pm$ 4.75e-04 & 1.461x & 1294.1 $\pm$ 838.0 & 2.02 $\pm$ 1.294 & 0.00265 $\pm$ 0.00155 \\
SAM+WD & 1.25 & 5/5 & 5003.7 $\pm$ 1238.2 & 1 $\pm$ 4.07e-04 & 1.353x & 8109.4 $\pm$ 569.8 & 19.08 $\pm$ 3.056 & 0.029 $\pm$ 0.00289 \\
\bottomrule
\end{tabular}%
}
\caption{Modular-addition results for 2-layer Transformers. Grokking and validation summaries use matched five-seed runs. Speedup is the median paired-seed SAM+WD speedup over WD. Han RF is the official Han et al. relative-flatness metric. Hessian columns use the previously computed three-seed Hessian-50 posthoc table matched by task, condition, and $\rho$.}
\label{tab:nanda_full_2}
\end{table}

\subsection{Computational Costs}
\label{app:compute}

We report computational cost in GPU-hours, computed as the sum of single-GPU wall-clock time across runs. All training runs used a single GPU; we do not normalize for GPU model, since experiments were scheduled opportunistically across A10, A30, A100 devices. SAM-based methods, including SAM and SAM+WD, require two gradient evaluations per optimizer step, requiring more time. However, the required time varies largely across conditions and tasks, so we report empirical run times rather than multiplying baseline costs by a fixed factor.

\begin{table}[H]
\centering
\begin{tabular}{@{}lcccl@{}}
\toprule
Experiment & Runs & Seeds & Wall time / run & Total GPU-hours \\
\midrule
Modular Arithmetic & 400 & 5 & 0.2--41 min & 73 \\
CIFAR-10 & 96 & 3 & 0.9--2.0 h & 113 \\
SST-5 & 168 & 3 & 0.35--3.82 h & 290 \\
MNIST & 110 & 3 & -- & 2 \\
IMDb & 261 & 3 & 0.5--1.2 h & 113 \\
\bottomrule
\end{tabular}
\caption{
We report the number of total runs, seeds, wall-time per run, and total GPU-hours to recreate all results on each task independently.}
\label{table:compute}
\end{table}

Computing sharpness measurements adds additional costs beyond training. Han-style final-layer flatness metrics are inexpensive relative to training, while Hessian maximum-eigenvalue estimates require repeated Hessian-vector products. For the modular-arithmetic tables, final $\lambda_{\max}$ values were computed with 50 iterations of the Lanczos method to the empirical training-loss Hessian with respect to all trainable parameters at the final checkpoint; this was run post hoc and is therefore not included in the training-time estimates in Table~\ref{table:compute}.

\end{document}